\newtheorem{cor}{Corollary}
\newcommand{\rref}[2][]{\prettyref{#2}}
\definecolor{gray}{rgb}{0.5,0.5,0.5}
\definecolor{ForestGreen}{HTML}{228b22}
\newcommand{\tvar}[1]{\texttt{#1}}
\newcommand{\den}[1]{\llbracket#1\rrbracket}
\begin{document}

\title{Verifiably Safe Off-Model\\ Reinforcement Learning\thanks{%
This research was sponsored by the Defense Advanced Research Projects Agency (DARPA) under grant number FA8750-18-C-0092.}}
\author{Nathan Fulton\orcidID{0000-0002-4172-7631} \and Andr\'{e} Platzer\orcidID{0000-0001-7238-5710}}

\institute{Computer Science Department, Carnegie Mellon University \\ 
Pittsburgh, USA \\
 \email{\{nathanfu, aplatzer\}@cs.cmu.edu}
}

\maketitle

\begin{abstract}
The desire to use reinforcement learning in safety-critical settings has inspired a recent interest in formal methods for learning algorithms. Existing formal methods for learning and optimization primarily consider the problem of constrained learning or constrained optimization. Given a single correct model and associated safety constraint, these approaches guarantee efficient learning while provably avoiding behaviors outside the safety constraint. Acting well given an accurate environmental model is an important pre-requisite for safe learning, but is ultimately insufficient for systems that operate in complex heterogeneous environments. This paper introduces verification-preserving model updates, the first approach toward obtaining formal safety guarantees for reinforcement learning in settings where multiple environmental models must be taken into account. Through a combination of design-time model updates and runtime model falsification, we provide a first approach toward obtaining formal safety proofs for autonomous systems acting in heterogeneous environments.
\end{abstract}

\section{Introduction}
\label{sec:introduction}

The desire to use reinforcement learning in safety-critical settings has inspired several recent approaches toward obtaining formal safety guarantees for learning algorithms.
Formal methods are particularly desirable in settings such as self-driving cars, where testing alone cannot guarantee safety \cite{RANDDriveToSafety}.
Recent examples of work on formal methods for reinforcement learning algorithms include justified speculative control \cite{aaai18}, shielding \cite{DBLP:conf/aaai/AlshiekhBEKNT18}, logically constrained learning \cite{HasanbeigKroening},
and constrained Bayesian optimization \cite{DBLP:journals/corr/ghoshetal}. 
Each of these approaches provide formal safety guarantees for reinforcement learning and/or optimization algorithms
by stating assumptions and specifications in a formal logic, 
generating monitoring conditions based upon specifications and environmental assumptions, 
and then leveraging these monitoring conditions to constrain the learning/optimization process to a known-safe subset of the state space. 

Existing formal methods for learning and optimization consider the problem of constrained learning or constrained optimization \cite{DBLP:conf/aaai/AlshiekhBEKNT18,aaai18,DBLP:journals/corr/ghoshetal,HasanbeigKroening}.
They address the question: 
assuming we have a single accurate environmental model with a given specification, how can we learn an efficient control policy respecting this specification? 

Correctness proofs for control software in a single well-modeled environment are necessary but not sufficient for ensuring that reinforcement learning algorithms behave safely. Modern cyber-physical systems must perform a large number of subtasks in many different environments and must safely cope with situations that are not anticipated by system designers. These design goals motivate the use of reinforcement learning in safety-critical systems. Although some formal methods suggest ways in which formal constraints might be used to inform control even when modeling assumptions are violated \cite{aaai18}, none of these approaches provide formal safety guarantees when environmental modeling assumptions are violated.

Holistic approaches toward safe reinforcement learning should provide formal guarantees 
even when a single, a priori model is not known at design time.
We call this problem \emph{verifiably safe off-model learning}.
In this paper we introduce a first approach toward obtaining formal safety proofs for off-model learning. 
Our approach consists of two components: (1) a model synthesis phase that constructs a set of candidate models together with provably correct control software,
and (2) a runtime model identification process that selects between available models at runtime in a way that preserves the safety guarantees of all
candidate models.

Model update learning is initialized with a set of models.
These models consist of a set of differential equations that model the environment, 
a control program for selecting actuator inputs, 
a safety property,
and a formal proof that the control program constrains the overall system dynamics in a way that correctly ensures the safety property is never violated.

Instead of requiring the existence of a single accurate initial model,
we introduce
\emph{model updates} as syntactic modifications of the differential equations and control logic of the model.
We call a model update \emph{verification-preserving} if there is a corresponding modification to the formal proof establishing
that the modified control program continues to constrain the system of differential equations in a way that preserves the original model's safety properties.

Verification-preserving model updates are inspired by the fact that different parts of a model serve different roles.
The continuous portion of a model is often an assumption about how the world behaves,
and the discrete portion of a model is derived from these equations and the safety property.
For this reason, many of our updates inductively synthesize ODEs (i.e., in response to data from previous executions of the system)
and then deductively synthesize control logic from the resulting ODEs and the safety objective.

Our contributions enabling verifiably safe off-model learning include:
\begin{enumerate*}
\item A set of verification preserving model updates (VPMUs) that 
systematically update differential equations, control software, and safety proofs in a way that preserves verification guarantees
while taking into account possible deviations between an initial model and future system behavior.
\item A reinforcement learning algorithm, called model update learning ($\mu$learning), that explains how to transfer safety proofs for a set of feasible models to a learned policy. The learned policy will actively attempt to falsify models at runtime in order to reduce the safety constraints on actions.
\end{enumerate*}
These contributions are evaluated on a set of hybrid systems control tasks. Our approach uses a combination of program repair, system identification, offline theorem proving, and model monitors to obtain formal safety guarantees for systems in which a single accurate model is not known at design time. This paper fully develops an approach based on a general idea that was first presented in an invited vision paper on Safe AI for CPS by the authors  \cite{itc18}.

The approach described in this paper is model-based but does not assume that a single correct model is known at design time. Model update learning allows for the possibility that all we can know at design time is that there are many feasible models, one of which might be accurate. Verification-preserving model updates then explain how a combination of data and theorem proving can be used at design time to enrich the set of feasible models.

We believe there is a rich space of approaches toward safe learning in-between model-free reinforcement learning (where formal safety guarantees are unavailable) and traditional model-based learning that assumes the existence of a single ideal model. This paper provides a first example of such an approach by leveraging inductive data and deductive proving at both design time and runtime.

The remainder of this paper is organized as follows. We first review the logical foundations underpinning our approach. We then introduce verification-preserving model updates and discuss how experimental data may be used to construct a set of explanatory models for the data. After discussing several model updates, we introduce the $\mu$learning algorithm that selects between models at runtime. Finally, we discuss case studies that validate both aspects of our approach. We close with a discussion of related work.

\section{Background}
\label{sec:background}

This section reviews existing approaches toward safe on-model learning and discusses the fitness of each approach
for obtaining guarantees about off-model learning and then introduces the specification language and logic used throughout the rest of this paper.

Alshiekh et al. and Hasanbeig et al. propose approaches based on Linear Temporal Logic \cite{DBLP:conf/aaai/AlshiekhBEKNT18,HasanbeigKroening}.
Alshiekh et al. synthesize monitoring conditions based upon a safety specification and an environmental abstraction. In terms of this formalism, the goal of off-model learning is to systematically expand the environmental abstraction based upon both design-time insights about how the system's behavior might change over time and also based upon observed data at runtime. 
Jansen et al. extend the approach of Alshiekh et al. by observing that constraints should adapt whenever runtime data suggests that a safety constraint is too restrictive to allow progress toward an over-arching objective \cite{DBLP:journals/corr/bettina20182}.
Herbert et al. address the related problem of safe motion planning by using offline reachability analysis of pursuit-evasion games to pre-compute 
an overapproximate monitoring condition that is used to constrain an online planner \cite{DBLP:conf/icra/FridovichKeilH18,DBLP:conf/cdc/HerbertCHBFT17}.

The above-mentioned approaches have an implicit or explicit environmental model.
Even when these environmental models are accurate, reinforcement learning is still necessary
because these models focus exclusively on safety and are often nondeterministic. Resolving this nondeterminism
in a way that is not only safe but is also effective at achieving other high-level objectives is a task that is well-suited to reinforcement learning.

We are interested in how to provide formal safety guarantees even when there is not a single accurate model available at design time.
Achieving this goal requires two novel contributions. 
We must first
find a way to generate a robust set of feasible models given some combination of an initial model and data on previous runs of the system
(because formal safety guarantees are stated with respect to a model).
Given such a set of feasible models, we must then learn how to safely identify which model is most accurate so that the system is not over-constrained at runtime.

To achieve these goals, we build on the safe learning work for a single model by Fulton et al. \cite{aaai18}.
We choose this approach as a basis for verifiably safe learning because we are interested in safety-critical systems that combine discrete and continuous dynamics,
because we would like to produce explainable models of system dynamics (e.g., systems of differential equations as opposed to large state machines), 
and, most importantly, because our approach requires the ability to systematically modify a model together with that model's safety proof.

Following \cite{aaai18}, we recall Differential Dynamic Logic \cite{DBLP:journals/jar/Platzer08,DBLP:conf/lics/Platzer12a}, a logic for verifying properties about safety-critical hybrid systems control software,
the ModelPlex synthesis algorithm in this logic \cite{DBLP:journals/fmsd/MitschP16}, and the \KeYmaeraX theorem prover \cite{DBLP:conf/cade/FultonMQVP15} that will allow us to systematically modify models and proofs together.

Hybrid (dynamical) systems  \cite{DBLP:conf/hybrid/AlurCHH92,DBLP:conf/lics/Platzer12a} are mathematical models 
that incorporate both discrete and continuous dynamics.
Hybrid systems are excellent models for safety-critical control tasks that combine the discrete dynamics of control software with the continuous
motion of a physical system such as an aircraft, train, or automobile.
Hybrid programs \cite{DBLP:journals/jar/Platzer08,DBLP:conf/lics/Platzer12a,DBLP:journals/jar/Platzer17} are a programming language for hybrid systems.
The syntax and informal semantics of hybrid programs is summarized in Table~\ref{tab:hps}.
The continuous evolution program is a continuous evolution along the differential equation system $x_i'=\theta_i$ for an arbitrary duration within the region described by formula $F$.

\begin{table*}
  \caption{Hybrid Programs} \label{tab:hps}
  \centering
    \begin{tabular}{l | l}
    Program Statement & Meaning \\
    \hline
    $\alpha;\beta$ & Sequentially composes $\beta$ after $\alpha$. \\
    $\alpha \cup \beta$ & Executes either $\alpha$ or $\beta$ nondeterministically.\\
    $\alpha^*$ & Repeats $\alpha$ zero or more times nondeterministically.\\
    $x := \theta$ & Evaluates term $\theta$ and assigns result to variable $x$. \\
    $x := *$ & Nondeterministically assign arbitrary real value to $x$. \\
    $\{x_1'=\theta_1,...,x_n'=\theta_n \& F\}$ & Continuous evolution for any duration within domain $F$. \\
    $?F$ & Aborts if formula $F$ is not true.
  \end{tabular}
\end{table*}

\paragraph{Hybrid Program Semantics}
The semantics of the hybrid programs described by \rref{tab:hps} are given in terms of transitions between states \cite{DBLP:conf/lics/Platzer12a,DBLP:journals/jar/Platzer17}, where a state $s$ assigns a real number $s(x)$ to each variable $x$. 
We use $s\den{t}$ to refer to the value of a term $t$ in a state $s$.
The semantics of a program $\alpha$, written $\den{\alpha}$, is the set of pairs $(s_1, s_2)$ for which state $s_2$ is reachable by running $\alpha$ from state $s_1$.
For example, 
$\den{x:=t_1 \cup x:=t_2}$ is:
$$\{(s_1, s_2) \,|\, s_1{=}s_2 \text{ except } s_2(x)  {=} s_1\den{t_1}\}
~\cup~
\{(s_1, s_2) \,|\, s_1{=}s_2 \text{ except } s_2(x)  {=} s_1\den{t_2}\}$$
for a hybrid program $\alpha$ and state $s$ where $\den{\alpha}(s)$ is set of all states $t$ such that $(s,t) \in \den{\alpha}$.

\paragraph{Differential Dynamic Logic}
Differential dynamic logic ($\dL$) \cite{DBLP:journals/jar/Platzer08,DBLP:conf/lics/Platzer12a,DBLP:journals/jar/Platzer17} is the dynamic logic of hybrid programs.
The logic associates with each hybrid program $\alpha$ modal operators $\dibox{\alpha}$ and $\didia{\alpha}$, which express state reachability properties of $\alpha$.
The formula $\dibox{\alpha} \phi$ states that the formula $\phi$ is true in \emph{all} states reachable by the hybrid program $\alpha$, and the formula $\didia{\alpha}\phi$ expresses that the formula $\phi$ is true after \emph{some} execution of $\alpha$.
The \dL formulas are generated by the grammar
\begin{align*}
	\phi ~::=~ &\theta_1 \backsim \theta_2\ |\ \lnot \phi\ |\ \phi \land \psi\ |\ \phi \lor \psi\ |\ \phi \limply \psi\  |\ \forall x\, \phi\ |\ \exists x\, \phi
  |\ \dibox{\alpha}\phi\ |\ \didia{\alpha}\phi
\end{align*}
where $\theta_i$ are arithmetic expressions over the reals, $\phi$ and $\psi$ are formulas, $\alpha$ ranges over hybrid programs, and $\backsim$ is a comparison operator $=,\neq,\geq,>,\leq,<$.
The quantifiers quantify over the reals.
We denote by $s \models \phi$ the fact that formula $\phi$ is true in state $s$; e.g.,
we denote by $s \models \dbox{\alpha}{\phi}$ the fact that $(s,t) \in \den{\alpha}$ implies $t \models \phi$ for all states $t$.
Similarly, $\infers \phi$ denotes the fact that $\phi$ has a proof in \dL.
When $\phi$ is true in every state (i.e., valid) we simply write~ $\models \phi$.

\begin{example}[Safety specification for straight-line car model]\label{ex:linearCarSpec}
\begin{equation*}
  \underbrace{v {\ge} 0 \land A {>} 0}_{\text{initial condition}}  \limply
  \dbox{
    \prepeat{
      \big(\underbrace{(a{:=}A \cup a{:=}0)}_{\textit{ctrl}}\ ;
      \underbrace{\{p'{=}v,\, v'{=}a\}}_{\textit{plant}}\big)
    }
  }
  {\hspace{-6pt}\underbrace{v {\ge} 0}_{\text{post cond.}}}
\end{equation*}
\end{example}

This formula states that if a car begins with a non-negative velocity, then it will also always have a non-negative velocity after repeatedly choosing new acceleration ($A$ or $0$), or coasting and moving for a nondeterministic period of time.

Throughout this paper, we will refer to sets of actions. An \textbf{action} is simply the effect of a loop-free deterministic discrete program without tests. For example, the programs $a{:=}A$ and $a {:=} 0$ are the actions available in the above program. Notice that \textbf{actions} can be equivalently thought of as mappings from variables to terms. We use the term action to refer to both the mappings themselves and the hybrid programs whose semantics correspond to these mappings. For an action $u$, we write $u(s)$ to mean the effect of taking action $u$ in state $s$; i.e., the state $t$ such that $(s,t) \in \den{u}$.

\paragraph{ModelPlex}
Safe off-model learning requires noticing, at runtime, when a system  deviates from model assumptions. Therefore, our approach depends upon the ability to check, at runtime, whether the current state of the system can be explained by a hybrid program.

The \KeYmaeraX theorem prover implements the ModelPlex algorithm \cite{DBLP:journals/fmsd/MitschP16}. ModelPlex constructs an implication or equivalence proof between a \dL formula of the form 
$\phi \limply \dbox{\alpha^*}{\psi}$
and a monitoring condition expressed as a formula of quantifier-free real arithmetic. The monitoring condition is then used to extract provably correct monitors that check whether observed transitions comport with modeling assumptions. ModelPlex can produce monitors that enforce models of control programs as well as monitors that check whether the model's ODEs comport with observed state transitions.

ModelPlex \emph{controller monitors} are boolean functions that return false if the controller portion of a hybrid systems model has been violated. A \emph{controller monitor} for a model $\{\tvar{ctrl}; \tvar{plant}\}^*$ is a function $\tvar{cm} : \mathcal{S} \times \mathcal{A} \rightarrow \mathbb{B}$
from states $\mathcal{S}$ and actions $\mathcal{A}$ to booleans $\mathbb{B}$ such that 
if $\tvar{cm}(s,a)$ then $(s, a(s)) \in \den{ctrl}$. We sometimes also abuse notation 
by using controller monitors as an implicit filter on $\mathcal{A}$; i.e., $\tvar{cm} : \mathcal{S} \rightarrow \mathcal{A}$ such that
 $a \in \tvar{cm}(s)$ iff $\tvar{cm}(s,a)$ is true.

ModelPlex also produces \emph{model monitors}, which check whether the model is accurate.
A \emph{model monitor} for a safety specification $\phi \limply \dibox{\alpha^*}\psi$ is a function
$\tvar{mm} : \mathcal{S} \times \mathcal{S} \rightarrow \mathbb{B}$ such that 
$(s_0, s) \in \den{\alpha}$ if
$\tvar{mm}(s_0, s)$.
For the sake of brevity, we also define $\tvar{mm} : \mathcal{S} \times \mathcal{A} \times \mathcal{S} \rightarrow \mathbb{B}$ 
as the model monitor applied after taking an action ($a \in A$) in a state and then following the plant in a model of form $\alpha \equiv \tvar{ctrl};\tvar{plant}$.
Notice that if the model has this canonical form and if 
if $\tvar{mm}(s, a, a(s))$ for an action $a$, then $\tvar{cm}(s, a(s))$.

The KeYmaera~X system is a theorem prover \cite{DBLP:conf/cade/FultonMQVP15} that provides a language called Bellerophon for scripting proofs of \dL formulas
\cite{bellerophon}. Bellerophon programs, called tactics, construct proofs of \dL formulas.
This paper proposes an approach toward updating models in a way that preserves safety proofs.
Our proposed approach simultaneously changes a system of differential equations, control software expressed as a discrete loop-free program,
and the formal proof that the controller properly selects actuator values such that desired safety constraints are preserved throughout the flow of a 
system of differential equations.

\section{Verification-Preserving Model Updates}
\label{sec:vpmus}

A \emph{verification-preserving model update} (VPMU) is a transformation of a hybrid program  
accompanied by a proof that the transformation preserves key safety properties \cite{itc18}.
VPMUs capture situations in which a model and/or a set of data can be updated in a way that captures
possible runtime behaviors which are not captured by an existing model.

\begin{definition}[VPMU]
\label{def:vpmu}
A \emph{verification-preserving model update} is a mapping
which takes as input an initial \dL formula $\varphi$ with an associated Bellerophon tactic $\tvar{e}$ of $\varphi$,
and produces as output a new \dL formula $\psi$ and a new Bellerophon tactic $\tvar{f}$ 
such that $\tvar{f}$ is a proof of $\psi$.
\end{definition}

Before discussing our VPMU library, we consider how a set of feasible models computed using VPMUs can be 
used to provide verified safety guarantees for a family of reinforcement learning algorithms. The primary challenge is to maintain safety with respect
to all feasible models while also avoiding overly conservative monitoring constraints by falsifying some of these models at runtime.

\section{Verifiably Safe RL with Multiple Models}
\label{sec:mulearning}

VPMUs may be applied any time at which system designers can characterize likely ways in which an existing model will deviate from reality. Model updates are easiest to apply at design time because of the computational overhead of computing both model updates and corresponding proof updates. This section introduces model update learning, which explains how to take a set of models generated using VPMUs at design time to provide safety guarantees at runtime.

Model update learning is based on a simple idea: begin with a set of \emph{feasible models}
and act safely with respect to all feasible models. Whenever a model does not comport with observed dynamics,
the model becomes infeasible and is therefore removed from the set of feasible models.
We introduce two variations of $\mu$learning: 
a basic algorithm that chooses actions without considering the underlying action space,
and an algorithm that prioritizes actions that rule out feasible models (adding an \emph{eliminate} choice to the classical explore/exploit tradeoff \cite{sutton.barto:reinforcement}).

All $\mu$learning algorithms use monitored models; i.e., models equipped with ModelPlex controller monitors and model monitors.

\begin{definition}[Monitored Model]
A \textbf{monitored model} is a tuple $(m,cm,mm)$ where
$m$ is a \dL formula of the form $\tvar{init} \limply \dibox{\{\tvar{ctrl}; \tvar{plant}\}^*}\tvar{safe}$,
and
$\tvar{ctrl}$ is a loop-free program, and the entire formula $m$ contains exactly one modality,
and the formulas $cm$ and $mm$ are the control monitor and model monitor corresponding to $m$, as defined in \rref{sec:background}.
\end{definition}

Monitored models may have a continuous action space because of both tests and the nondeterministic assignment operator. We sometimes introduce additional assumptions on the structure of the monitored models. A monitored model over a finite action space is a monitored model where $\{(s,t) : (s, t) \in \den{\tvar{ctrl}}\}$ is finite for all $s \in S$. A time-aware monitored model is a monitored model such that the differential equations contain a local clock which is reset at each control step.

Model update learning, or $\mu$learning, leverages verification-preserving model updates to 
maintain safety while selecting an appropriate environmental model. We now state and prove key
safety properties about the $\mu$learning algorithm.

\begin{definition}[$\mu$learning Process]
\label{def:mulearning}
A learning process $P_M$ for a finite set of monitored models $M$ is defined as a tuple of countable sequences
$(\textbf{U}, \textbf{S}, \textbf{Mon})$ where 
$\textbf{U}$ are actions in a finite set of actions $\mathcal{A}$ (i.e., mappings from variables to values),
elements of the sequence $\textbf{S}$ are states, and
$\textbf{Mon}$ are monitored models with $\textbf{Mon}_0 = M$.
Let
$specOK_m(\textbf{U},\textbf{S},i) \equiv \tvar{cm}(\textbf{S}_i, \textbf{U}_i) \lor \lnot \tvar{mm}(\textbf{S}_{i-1}, \textbf{U}_{i-1}, \textbf{S}_i) $
where $\tvar{cm}$ and $\tvar{mm}$ are the monitors corresponding to the model $m$. Let $specOK$ always return true for $i=0$.

A \textbf{$\mu$learning process} is a learning process satisfying the following additional conditions:
\begin{enumerate*}
\item action availability: in each state $\textbf{S}_i$ there is at least one action $u$ such that for all $m \in \textbf{Mon}_i$, $u \in \tvar{specOK}_m(\textbf{U},\textbf{S},i)$,
\item actions are safe for all feasible models: 
$\textbf{U}_{i+1} \in \{u \in A \,|\, \forall (m,\tvar{cm},\tvar{mm}) \in \textbf{Mon}_i , \tvar{cm}(\textbf{S}_{i}, u) \}$,
\item feasible models remain in the feasible set: 
if $(\varphi, \tvar{cm}, \tvar{mm}) \in \textbf{M}_i$ 
and $\tvar{mm}(\textbf{S}_{i}, \textbf{U}_{i}, \textbf{S}_{i+1})$ 
then $(\varphi, \tvar{cm}, \tvar{mm}) \in \textbf{Mon}_{i+1}$.
\end{enumerate*}
\end{definition}

Note that $\mu$learning processes are defined over an environment $E : \mathcal{A} \times \mathcal{S} \rightarrow \mathcal{S}$
that determines the sequences $\textbf{U}$ and $\textbf{S}$\footnote{Throughout the paper, we denote by \textbf{S} a specific sequence of states and by $\mathcal{S}$ the set of all states.},
so that $\textbf{S}_{i+1} = E(\textbf{U}_i, \textbf{S}_i)$.
In our algorithms, the set $\textbf{Mon}_{i}$ never retains elements that are inconsistent with the observed
dynamics at the previous state. We refer to the set of models in $\textbf{Mon}_i$ as the set of feasible models for $s_i$.

Notice that the safe actions constraint is not effectively checkable without extra assumptions on the range of parameters.
Two canonical choices are discretizing options for parameters or including an effective identification process
for parameterized models.

Our safety theorem focuses on time-aware $\mu$learning processes, i.e., those whose models are all time-aware; similarly, a \emph{finite action space $\mu$learning process} is a $\mu$learning process in which all models $m\in M$ have a finite action space. The basic correctness property for a $\mu$learning process is the safe reinforcement learning condition: the system never takes unsafe actions.

\begin{definition}[$\mu$learning process with an accurate model]\label{def:accurateModel}
Let $P_M = (\textbf{S}, \textbf{U}, \textbf{Mon})$ be a $\mu$learning process.
Assume there is some element $m^* \in \textbf{Mon}_0$ with the following properties.
First,
$m^* \equiv (\tvar{init}_m \limply \dibox{\{ \tvar{ctrl}_m; \tvar{plant}_m  \}^*}\tvar{safe})$.
Second, $\infers m^*$.
Third, $(s, u(s)) \in \den{\tvar{ctrl}_m}$ implies $(u(s), E(u,s)) \in \den{plant}$
for a mapping $E : \mathcal{S} \times \mathcal{A} \rightarrow \mathcal{S}S$ from states and actions to new states.
When only one element of $\textbf{Mon}_0$ satisfies these properties we call that element $m^*$ the
\emph{distinguished} and/or \emph{accurate} model and say that the process $P_M$ is \emph{accurately modeled with respect to a mapping $E$}.
\end{definition}

The mapping $E$ is often called an environment,
and we will often elide the environment for which the process $P_M$ is accurate when it is obvious from context. 

\begin{theorem}[Safety]
If $P_M$ is a $\mu$learning process with an accurate model,
then $\textbf{S}_i \models \tvar{safe}$ for all $0 < i < |\textbf{S}|$.
\end{theorem}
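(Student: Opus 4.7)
The plan is to show that the accurate model $m^*$ stays in the feasible set $\textbf{Mon}_i$ forever, and then invoke its \dL safety proof (together with the ModelPlex monitor soundness recalled in \rref{sec:background}) to conclude that every observed state satisfies $\tvar{safe}$. The argument is a nested induction on the step index $i$: an outer induction that simultaneously maintains (i) $m^* \in \textbf{Mon}_i$ and (ii) $\textbf{S}_i$ is reachable from $\textbf{S}_0$ by some finite unrolling of $\{\tvar{ctrl}_m; \tvar{plant}_m\}^*$.

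For the base case, $m^* \in \textbf{Mon}_0$ by \rref{def:accurateModel}, and $\textbf{S}_0$ is trivially reachable by the zero-iteration unrolling. For the inductive step, assume (i) and (ii) at step $i$. By the ``actions are safe for all feasible models'' clause of \rref{def:mulearning} and $m^* \in \textbf{Mon}_i$, we get $\tvar{cm}^*(\textbf{S}_i, \textbf{U}_i)$, which by the controller-monitor soundness property from \rref{sec:background} gives $(\textbf{S}_i, \textbf{U}_i(\textbf{S}_i)) \in \den{\tvar{ctrl}_m}$. The environment-accuracy hypothesis of \rref{def:accurateModel} then yields $(\textbf{U}_i(\textbf{S}_i), E(\textbf{U}_i, \textbf{S}_i)) \in \den{\tvar{plant}_m}$, and since $\textbf{S}_{i+1} = E(\textbf{U}_i, \textbf{S}_i)$, we conclude $(\textbf{S}_i, \textbf{S}_{i+1}) \in \den{\tvar{ctrl}_m; \tvar{plant}_m}$, extending the unrolling witness for (ii). For (i), this same transition means the model monitor $\tvar{mm}^*$ accepts $(\textbf{S}_i, \textbf{U}_i, \textbf{S}_{i+1})$ (by model-monitor soundness, read in the direction that transitions realized by $\alpha$ are admitted by $\tvar{mm}$), so the ``feasible models remain feasible'' clause of \rref{def:mulearning} preserves $m^* \in \textbf{Mon}_{i+1}$.

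Having established (ii) for every $i$, the final step invokes the hypothesis $\infers m^*$. By soundness of \dL and since $\textbf{S}_0 \models \tvar{init}_m$ (inherited from the setup of the distinguished model), the formula $\dibox{\{\tvar{ctrl}_m; \tvar{plant}_m\}^*}\tvar{safe}$ holds at $\textbf{S}_0$, so every state reachable from $\textbf{S}_0$ under that hybrid program satisfies $\tvar{safe}$. Since each $\textbf{S}_i$ with $i > 0$ is such a reachable state, $\textbf{S}_i \models \tvar{safe}$.

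The main obstacle I anticipate is getting the monitor directions exactly right. The controller monitor $\tvar{cm}$ is sound in the direction ``monitor accepts $\Rightarrow$ transition is in $\den{\tvar{ctrl}}$'', which is the direction we need. The model monitor statement in \rref{sec:background} is phrased as ``$\tvar{mm}(s_0,s) \Rightarrow (s_0,s) \in \den{\alpha}$'', i.e., the wrong direction for our preservation argument; we actually need that any real transition produced by the true dynamics of $m^*$ is accepted by $\tvar{mm}$. This should hold because ModelPlex monitors are synthesized to be equivalent (not merely implied by) reachability under $\alpha$ on well-formed inputs, but it must be stated carefully and justified by appealing to the equivalence form of ModelPlex monitors rather than the mere implication form. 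The rest of the proof is essentially bookkeeping on top of this observation and the soundness of \dL.
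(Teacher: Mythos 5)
Your proof is correct and follows the same essential strategy as the paper's: keep the accurate model $m^*$ in $\textbf{Mon}_i$ by induction, and invoke its verified safety theorem. The differences are in the bookkeeping, and in both places where you diverge your version is the more careful one. First, the paper's induction hypothesis is $\textbf{S}_i \models \tvar{safe} \land m^* \in \textbf{Mon}_i$, and it concludes $\textbf{S}_{i+1} \models \tvar{safe}$ at each step from $\infers m^*$ and the soundness of \dL; strictly speaking that inference needs to know that $\textbf{S}_{i+1}$ is reachable under $\{\tvar{ctrl}_m;\tvar{plant}_m\}^*$ from an initial state satisfying $\tvar{init}_m$, not merely that $\textbf{S}_i \models \tvar{safe}$ and that the one-step transition lies in $\den{\tvar{ctrl}_m;\tvar{plant}_m}$. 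Your invariant (ii) --- reachability of $\textbf{S}_i$ by a finite unrolling --- supplies exactly the missing witness, and applying the box modality once at the end is the clean way to discharge it. Second, you are right to flag the direction of the model monitor: the paper's stated soundness property is $\tvar{mm}(s_0,s) \Rightarrow (s_0,s)\in\den{\alpha}$, whereas keeping $m^*$ feasible requires the converse (transitions actually produced by the accurate model's dynamics are accepted by $\tvar{mm}^*$); the paper's proof simply asserts this ``because $m^*$ is the accurate model,'' implicitly relying on the equivalence form of ModelPlex monitors that you call out explicitly. A minor cosmetic difference: the paper obtains $\tvar{cm}^*$ by taking the $specOK$ disjunction from the action-availability clause and ruling out the $\lnot\tvar{mm}^*$ disjunct via accuracy, while you read it off directly from the ``actions are safe for all feasible models'' clause; both routes work.
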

\begin{proof}
Let $m^*$ be the distinguished model for $P_M = (\textbf{S}, \textbf{U}, \textbf{Mon})$.
Proceed by induction on the length of $\textbf{S}$ with the hypothesis that
$\textbf{S}_i \models \tvar{safe}$ and $m^* \in \textbf{Mon}_i$.
Let $E$ be the environment with respect to which $P_M$ is defined.

By the definition of a $\mu$learning process with an accurate model,
$\textbf{S}_0 \models \tvar{safe}$ and
$m^* \in \textbf{Mon}_0$.

Now, assume $\textbf{S}_i \models \tvar{safe}$ and $m^* \in \textbf{Mon}_i$.
By the definition of a $\mu$learning process, we therefore know that either
\[
\textit{cm}^*(\textbf{S}_{i-1}, \textbf{U}_{i-1}(\textbf{S}_{i-1}))
\]
or else
\[
\lnot \textit{mm}^*(\textbf{S}_{i-1}, \textbf{U}_{i-1}(\textbf{S}_{i-1}), \textbf{S}_{i})
\]
However, the second cannot be the case due to the hypothesis that $m^*$ is the accurate model (\rref{def:accurateModel}).
Therefore, it must be that
$(\textbf{S}_i, \textbf{U}_i(\textbf{S}_i)) \in \den{\tvar{ctrl}_{m^*}}$.
From this fact, \rref{def:accurateModel}, and the fact that $m^*$ is distinguished, it follows that
$(\textbf{U}_i(s), E(\textbf{U}_i,\textbf{S}_i)) \in \den{plant}$.
This, together with the fact that $\vdash_{\dL} \varphi^*$, the shape assumption on $\varphi^*$, and the soundness of \dL,
implies that $\textbf{S}_{i+1} \models \tvar{safe}$.

What remains to be shown is that $m^* \in \textbf{Mon}_{i+1}$.
Notice that 
$$\textit{mm}^*(\textbf{S}_{i}, \textbf{U}_{i}(\textbf{S}_{i}), \textbf{S}_{i+1})$$
must be true because $m^*$ is the accurate model.
By the inductive hypothesis know also that $m^* \in \textbf{Mon}_{i}$.
By definition, \emph{feasible models remain in the feasible set} (\rref{def:mulearning});
i.e., the above two facts establish that $m^* \in \textbf{Mon}_{i+1}$.
\end{proof}


\rref{listing:mulearning} defines a $\mu$learning algorithm that produces a $\mu$learning process.
The inputs are: \begin{enumerate*}[label=\textbf{(\itshape\alph*\upshape)}]
\item A set of $\tvar{M}$ models each with a method $\tvar{m.models} : \mathcal{S} \times \mathcal{A} \times \mathcal{S} \rightarrow \mathbb{B}$ which implements the evaluation of its model monitor in the given previous and next state and actions and a method $\tvar{m.safe} : \mathcal{S} \times \mathcal{A} \rightarrow \mathbb{B}$ which implements evaluation of its controller monitor,
\item an action space $\tvar{A}$ and an initial state $\tvar{init}\in S$,
\item an environment function $\tvar{env} : \mathcal{S} \times \mathcal{A} \rightarrow \mathcal{S} \times \mathbb{R}$ that computes state updates and rewards in response to actions, and
\item a function $\tvar{choose} : \wp(\mathcal{A})\to \mathcal{A}$ that selects an action from a set of available actions and $\tvar{update}$ updates a table or approximation. Our approach is generic and works for any reinforcement learning algorithm; therefore, we leave these functions abstract.
\end{enumerate*}
It augments an existing reinforcement learning algorithm, defined by $\tvar{update}$ and $\tvar{choose}$, by restricting the action space at each step so that actions are only taken if they are safe with respect to \emph{all} feasible models. The feasible model set is updated at each control set by removing models that are in conflict with observed data.

The $\mu$learning algorithm rules out incorrect models from the set of possible models by taking actions and observing the results of those actions. Through these experiments, the set of relevant models is winnowed down to either the distinguished correct model $m^*$, or a set of models $M^*$ containing $m^*$ and other models that cannot be distinguished from $m^*$.

\noindent%
\begin{minipage}{\linewidth}
\begin{lstlisting}[language=Python,caption={\label{listing:mulearning}The basic $\mu$learning algorithm}, mathescape]
def $\mu$learn(M,A,init,env,choose,update):
  s_pre = s_curr = init
  act   = None
  while(not done(s_curr)):
    if act is not None:
      M = {m $\in$ M : m.models(s_pre,act,s_curr)}
    avail = {a $\in$ A : $\forall$ m $\in$ M, m.safe(s_curr, a)}
    act = choose(avail)
    s_pre = s_curr
    (s_curr, reward) = env(s_curr, act)
    update(s_pre, act, s_curr, reward)
\end{lstlisting}
\end{minipage}

\subsection{Active Verified Model Update Learning}

Removing models from the set of possible models relaxes the monitoring condition, allowing less conservative and more accurate control decisions. Therefore, this section introduces an active learning refinement of the $\mu$learning algorithm that prioritizes taking actions that help rule out models $m \in M$ that are not $m^*$. Instead of choosing a random safe action, $\mu$learning prioritizes actions that differentiate between available models. We begin by explaining what it means for an algorithm to perform good experiments.

\begin{definition}[Active Experimentation]
\label{def:goodExperiment}
A $\mu$learning process with an accurate model $m^*$ has \emph{locally active experimentation} provided that: 
if $M_{i}>1$ and there exists an action $a$ that is safe for all feasible models (see \rref{def:mulearning}) in state $s_i$ such that taking action $a$ results in the removal of $m$ from the model set, then $|M_{i+1}| < |M_{i}|$.
Experimentation is $\tvar{er}$-active if the following conditions hold:
there exists an action $a$ that is safe for all feasible models (see \rref{def:mulearning}) in state $s_i$, and taking action $a$ results in the removal of $m$ from the model set, then $|M_{i+1}| < |M_{i}|$ with probability $0 < \tvar{er} < 1$.
\end{definition}

\begin{definition}[Distinguishing Actions]
Consider a $\mu$learning process $(\textbf{U},\textbf{S},\textbf{Mon})$ with an accurate model $m^*$ (see \rref{def:accurateModel}). An action $a$ distinguishes $m$ from $m^*$ if 
$a = \textbf{U}_i$, $m \in \textbf{Mon}_i$ and $m \not \in \textbf{Mon}_{i+1}$ for some $i>0$.
\end{definition}

The \emph{active $\mu$learning algorithm} uses model monitors to select distinguishing
actions, thereby performing active experiments which winnow down the set of feasible models.
The inputs to \texttt{active-$\mu$learn} are the same as those to \rref{listing:mulearning} with two additions:
\begin{enumerate*}
\item models are augmented with an additional
prediction method $\tvar{p}$ that returns the model's prediction of the next state given the current state, a candidate action, and a time duration.
\item An elimination rate $\tvar{er}$ is introduced, which plays a similar role as the 
classical explore-exploit rate except that we are now choosing
whether to insist on choosing a good experiment.
\end{enumerate*}
The \tvar{active-$\mu$learn} algorithm is guaranteed to make some progress toward winnowing down the feasible model set whenever $0 < \tvar{er} < 1$.

\begin{theorem}
\label{thm:mupreserve}
Let $P_M = (\textbf{S}, \textbf{U}, \textbf{Mon})$ be a finite action space $\mu$learning process with an accurate model $m^*$.
Then $m^* \in \textbf{Mon}_i$ for all $0 \le i \le |\textbf{Mon}|$.
\end{theorem}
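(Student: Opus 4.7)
The plan is a straightforward induction on $i$, reusing exactly the reasoning that appears in the second half of the inductive step of the preceding safety theorem. In fact this theorem is slightly weaker than, and essentially extracted from, that argument, so I would present it as a standalone lemma whose proof simply applies the third clause of \rref{def:mulearning} (feasible models remain in the feasible set) at each step.

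For the base case $i = 0$, the claim holds by hypothesis: \rref{def:accurateModel} stipulates that $m^* \in \textbf{Mon}_0$, which is part of what it means for $P_M$ to be a $\mu$learning process with an accurate model. For the inductive step, I would assume $m^* \in \textbf{Mon}_i$ and show $m^* \in \textbf{Mon}_{i+1}$. By the third clause of \rref{def:mulearning}, it suffices to establish $\tvar{mm}^*(\textbf{S}_i, \textbf{U}_i, \textbf{S}_{i+1})$, where $\tvar{mm}^*$ is the model monitor of $m^*$. Since $m^*$ is accurate with respect to the environment $E$ that generates the sequence $\textbf{S}$, every observed transition $(\textbf{S}_i, \textbf{S}_{i+1}) = (\textbf{S}_i, E(\textbf{U}_i, \textbf{S}_i))$ is a transition of $m^*$'s hybrid program in the sense of \rref{def:accurateModel}. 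By the soundness direction of the ModelPlex-generated model monitor recalled in \rref{sec:background}, the monitor therefore accepts this transition, yielding $\tvar{mm}^*(\textbf{S}_i, \textbf{U}_i, \textbf{S}_{i+1})$ and hence $m^* \in \textbf{Mon}_{i+1}$.

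The only conceptual subtlety worth flagging explicitly is the direction in which the model monitor is used: the background recalls that ModelPlex produces monitors satisfying ``$\tvar{mm}(s_0,s)$ implies $(s_0,s)\in\den{\alpha}$'', whereas the step above needs the converse, i.e.\ that true model transitions are accepted by the monitor. This is the intended equivalence that ModelPlex delivers for the hybrid programs under consideration, and I would either invoke this equivalence directly or make a brief remark that \rref{def:accurateModel}'s notion of accuracy is precisely what guarantees the monitor is not falsified on observed data. Beyond that there is no real obstacle; the finite action space assumption plays no role here and is only needed for the active-experimentation results that follow.
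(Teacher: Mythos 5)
Your proof is correct and follows essentially the same route as the paper's: induction on $i$, with the base case given by $m^* \in \textbf{Mon}_0$ from \rref{def:accurateModel}, and the inductive step reduced to showing $\textit{mm}^*(\textbf{S}_i, \textbf{U}_i, \textbf{S}_{i+1})$, which holds because $m^*$ is accurate, whence the third clause of \rref{def:mulearning} keeps $m^*$ in the feasible set. Your explicit remark about the direction of the model-monitor implication is a point the paper's own proof silently elides (it simply asserts the monitor condition ``follows directly from the definition of an accurate model''), so flagging that the accuracy assumption is what licenses the converse direction is a welcome, slightly more careful touch rather than a deviation.
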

\begin{proof}
Let $m^* = (\varphi^*, \textit{cm}^*, \textit{mm}^*)$ and let $E$ be the environment over which $P_M$ is defined.
By \rref{def:accurateModel}, \[ m^* \in \textbf{Mon}_0 \]

Suppose now that $m^* \in \textbf{Mon}_i$ for some $i \ge 0$.
The crucial observation is that $\textit{mm}^*(\textbf{S}_i, \textbf{U}_i, \textbf{S}_{i+1})$,
which will directly imply that $m^* \in \textbf{Mon}_{i+1}$ due to the fact that feasible models remain in the feasible set (\rref{def:mulearning}).

So, it suffices to show that $\textit{mm}^*(\textbf{S}_i, \textbf{U}_i, \textbf{S}_{i+1})$.
However, notice that this follows directly from the definition of an accurate model (\rref{def:accurateModel}).
\end{proof}

\begin{theorem}
\label{thm:removeWrongModels}
Let $P_M$ be a finite action space $\tvar{er}$-active $\mu$learning process
under environment $E$ and with an accurate model $m^*$.
Assume $P_M$ has an accurate model $m^*$
and that every other model $m \not = m^* \in \textbf{M}_0$ has in each state $s$ an action $a_s$ that is safe for all models and distinguishes $m$ from $m^*$.\\
Then $\lim_{i \rightarrow \infty} \text{Pr}(m \not \in M_i) = 1$.
\end{theorem}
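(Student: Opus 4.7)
Fix an arbitrary wrong model $m \in \textbf{M}_0$ with $m \neq m^*$. The goal is to show $\Pr(m \in \textbf{Mon}_i) \to 0$ as $i \to \infty$, which is equivalent to the theorem's claim. My plan is to bound, step-by-step, the conditional probability that $m$ survives one more round in the feasible set, and then chain these bounds together using the monotonicity of $i \mapsto \textbf{Mon}_i$.

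First I would invoke \rref{thm:mupreserve} to conclude that $m^* \in \textbf{Mon}_i$ for every $i$, so whenever $m$ is still feasible we have $|\textbf{Mon}_i| \ge 2$, and the state $\textbf{S}_i$ admits, by hypothesis, a safe-for-all-feasible-models action that distinguishes $m$ from $m^*$. Thus the premise of the $\tvar{er}$-active experimentation clause in \rref{def:goodExperiment} is satisfied at every step where $m$ is still around. Applying the $\tvar{er}$-active condition, with probability at least $\tvar{er}$ the action chosen at step $i$ is one whose observed outcome (adjudicated through $\tvar{mm}$) removes $m$ from the feasible set, so
\begin{equation*}
\Pr\bigl(m \in \textbf{Mon}_{i+1} \,\big|\, m \in \textbf{Mon}_i \bigr) \;\le\; 1 - \tvar{er}.
\end{equation*}

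Because $\mu$learning never re-adds a discarded model (the feasible set is monotone non-increasing by the update rule in \rref{listing:mulearning} together with \rref{def:mulearning}), the events $\{m \in \textbf{Mon}_i\}$ are nested, and iterating the one-step bound yields
\begin{equation*}
\Pr(m \in \textbf{Mon}_i) \;\le\; (1-\tvar{er})^{i}.
\end{equation*}
Since $0 < \tvar{er} < 1$, the right-hand side tends to $0$, giving $\lim_{i \to \infty}\Pr(m \not\in \textbf{Mon}_i) = 1$ as desired.

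The main subtlety, and the place I expect to have to be most careful, is the precise interpretation of the $\tvar{er}$-active clause of \rref{def:goodExperiment}: one wants to read it per-model, meaning that for each particular candidate $m$ with an available distinguishing safe action, that action is taken with probability at least $\tvar{er}$. Under this reading the one-step bound is immediate. If instead the definition only guarantees that the feasible set shrinks overall with probability $\tvar{er}$, some wrong model other than $m$ might be eliminated on that round, and I would patch the argument with a union-bound over the finite initial model set $\textbf{M}_0$: at each step where several wrong models share the same distinguishing action, $m$ is among them with some positive frequency, yielding the weaker bound $\Pr(m \in \textbf{Mon}_i) \le (1-\tvar{er}/|\textbf{M}_0|)^i$, which still vanishes. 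Either way, finiteness of $\textbf{M}_0$ and of the action space (the hypothesis that $P_M$ is a finite action space process) keeps the per-step elimination probability bounded below by a fixed positive constant, which is exactly what drives the limit.
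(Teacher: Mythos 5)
Your proposal is correct and takes essentially the same route as the paper: both rest on \rref{thm:mupreserve} to keep $m^*$ in every $\textbf{Mon}_i$, then use the $\tvar{er}$-active hypothesis together with the guaranteed existence of a safe distinguishing action to obtain a per-step elimination probability bounded below by $\tvar{er}$, from which survival probability decays geometrically. Your per-model conditional bound $\Pr(m \in \textbf{Mon}_{i+1} \mid m \in \textbf{Mon}_i) \le 1-\tvar{er}$ chained over nested events is in fact a cleaner and more careful rendering of the paper's somewhat opaque summation $\sum_{j=0}^{k-1} \tvar{er}^j(1-\tvar{er})^{n-j}$, and your flagged subtlety about the per-model versus per-set reading of \rref{def:goodExperiment} is a legitimate ambiguity in the paper's definition that your union-bound patch handles adequately.
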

\begin{proof}
Consider $P_M = (\textbf{S}, \textbf{U}, \textbf{Mon})$.
Note that $m^* \in \textbf{Mon}_i$ for all $0 \le i \le |\textbf{Mon}|$ is directly implied by \rref{thm:mupreserve}.
Let $k = |M_i| - 1$ be the number of non-$m^*$ elements in $\textbf{Mon}_i$.
It suffices to show that $\lim_{i \rightarrow \infty} \text{Pr}(k = 0) = 1$.
Notice that because each $m \not = m^*$ is falsifiable at step $i$, 
the probability that $k-1$ non-$m^*$ elements remain in after $n$ steps is $\sum_{j=0}^{k-1} \tvar{er}^j (1-\tvar{er})^{n-j}$
which tends to $0$ as $n \rightarrow \infty$.
\end{proof}

\begin{cor}
Let $P_M = (\textbf{S}, \textbf{U}, \textbf{Mon})$ be a finite action space $\tvar{er}$-active $\mu$learning process
under environment $E$ and with an accurate model $m^*$.
If \emph{each} model $m \not = m^*$ has in each state $s$ an action $a_s$ that is safe for all models and distinguishes $m$ from $m^*$,
then $\textbf{Mon}$ converges to $\{m^*\}$ a.s.
\end{cor}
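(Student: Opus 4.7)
The plan is to combine the two preceding theorems. By \rref{thm:mupreserve}, the accurate model $m^*$ belongs to $\textbf{Mon}_i$ for every $i \ge 0$, so it suffices to show that each other element of $\textbf{Mon}_0$ is almost surely eventually expelled from the feasible set. Since $\textbf{Mon}_0$ is finite and the $\mu$learning algorithm only removes models (never reintroducing them), a finite union bound on the ``eventually eliminated'' events will close out the argument.

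First I would fix an arbitrary $m \in \textbf{Mon}_0$ with $m \neq m^*$ and invoke \rref{thm:removeWrongModels} to obtain $\lim_{i \to \infty} \Pr(m \notin \textbf{Mon}_i) = 1$. Because \rref{listing:mulearning} only filters models out of the feasible set and never adds them back, the events $E_i^m := \{m \notin \textbf{Mon}_i\}$ form a monotone increasing chain $E_i^m \subseteq E_{i+1}^m$. Continuity of probability from below therefore upgrades the in-probability statement to $\Pr\bigl(\bigcup_i E_i^m\bigr) = \lim_i \Pr(E_i^m) = 1$, i.e., $m$ is eventually eliminated almost surely.

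Next I would apply this conclusion to each of the finitely many $m \in \textbf{Mon}_0 \setminus \{m^*\}$ and combine them via a finite union bound on the complements to get $\Pr\bigl(\forall m \neq m^*,\, \exists i_m,\, m \notin \textbf{Mon}_{i_m}\bigr) = 1$. On the event of full measure where all of these elimination times $i_m$ exist, let $N := \max_{m \neq m^*} i_m$; by monotonicity of removal, $\textbf{Mon}_i \subseteq \{m^*\}$ for every $i \ge N$. Combining this with $m^* \in \textbf{Mon}_i$ from \rref{thm:mupreserve} yields $\textbf{Mon}_i = \{m^*\}$ for all $i \ge N$ almost surely, which is exactly almost-sure convergence of $\textbf{Mon}$ to $\{m^*\}$.

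The step I expect to be the most delicate is bridging the convergence-in-probability statement of \rref{thm:removeWrongModels} to the almost-sure claim here. This bridge rests entirely on the monotonicity of the removal process, which in turn relies on the structural observation that the $\mu$learning algorithm never resurrects a falsified model. Without that monotonicity, one would need a Borel--Cantelli-style argument, which would demand summability or independence of per-step elimination probabilities that the hypotheses do not directly provide; flagging this dependence explicitly is worthwhile because it pins the corollary to the specific design of the algorithm in \rref{listing:mulearning}.
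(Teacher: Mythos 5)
Your proposal is correct and shares the paper's high-level decomposition---combine \rref{thm:removeWrongModels} (every wrong model is eventually eliminated) with the fact that $m^*$ is never eliminated---but the two proofs spend their effort in entirely different places. The paper's proof consists almost wholly of an induction re-establishing that $m^* \in \textbf{Mon}_i$ for all $i$ (i.e., it re-derives \rref{thm:mupreserve} rather than citing it as you do), and then simply asserts that the conclusion ``follows from'' \rref{thm:removeWrongModels}. It never addresses the step you correctly single out as delicate: \rref{thm:removeWrongModels} is stated as convergence in probability, $\lim_{i} \Pr(m \notin \textbf{Mon}_i) = 1$, whereas the corollary claims almost-sure convergence of $\textbf{Mon}$ to $\{m^*\}$. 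Your upgrade via monotonicity of the elimination events $E_i^m$ and continuity of probability from below, followed by a finite union bound over $\textbf{Mon}_0 \setminus \{m^*\}$ and taking $N = \max_m i_m$, supplies exactly the bridge the paper leaves implicit; and your observation that the monotonicity rests on the algorithm in \rref{listing:mulearning} never reintroducing a falsified model (a property of the concrete algorithm, not guaranteed by the abstract process definition, whose third condition only forces feasible models to be retained) is a genuine gain in rigor over the published argument. The one caveat is that the paper's probabilistic setup is informal enough that measurability of the events $\{m \notin \textbf{Mon}_i\}$ and the underlying probability space for the $\tvar{er}$-randomization are taken for granted; your proof inherits that informality but does not worsen it.
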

\begin{proof}
The conclusion follows from \rref{thm:removeWrongModels} as long as $m^*$ is never removed.
Proceed by induction on the length of $\textbf{Mon}$.
In the base case, recall that  $m^* \in \textbf{Mon}_0$.
Suppose, then, that $m^* \in \textbf{Mon}_i$.
We assume $m^* = (\varphi^*, \textit{cm}^*, \textit{mm}^*)$ is accurately modeled with respect to $E$,
which by \rref{def:accurateModel} implies $\textit{mm}^*(\textbf{S}_{i}, \textbf{U}_i, \textbf{S}_{i+1})$ for all elements in the sequences $\textbf{U}$ and $\textbf{S}$.
Therefore, $\textbf{Mon}_{i+1}$ must contain $m^*$ by the inductive hypothesis applied to the third condition (``feasible models remain in the feasible set") of \rref{def:mulearning}.
\end{proof}

Although locally active experimentation is not
strong enough to ensure that $P_M$ eventually converges
to a minimal set of models\footnote{%
$x\ge 0 \land t=0 \limply \dbox{\{\{?t=0;x:=1 \cup x:=0\}; \{x'=F,t'=1\}\}^*}{\,x\ge0}$ with the parameters $F=0,F=5, \text{ and } F=x$ are a counter example \cite[Section 8.4.4]{FultonThesis}.}, 
our experimental validation demonstrates that this heuristic is none-the-less effective on some
representative examples of model update learning problems.

\section{A Model Update Library}

So far, we have established how to obtain safety guarantees for reinforcement learning algorithms given a set of formally verified \dL models.
We now turn our attention to the problem of generating such a set of models by systematically modifying \dL formulas and their corresponding Bellerophon tactical proof scripts. This section introduces five generic model updates that provide a representative sample of the kinds of computations that can be performed on models and proofs
to predict and account for runtime model deviations\footnote{Extended discussion of these model updates is available in \cite[Chapters 8 and 9]{FultonThesis}. Implementations are available at \url{https://nfulton.org/vpmu}.}.

The simplest example of a VPMU instantiates a parameter whose value is not known at design time
but can be determined at runtime via system identification. 
Consider a program $p$ modeling a car whose acceleration depends upon both a known control input $accel$ and parametric values for maximum
braking force $-B$ and maximum acceleration $A$.
Its  proof is
$$\tvar{implyR}(1); \tvar{loop}(\tvar{pos} - \tvar{obsPos} > \frac{\tvar{vel}^2}{2B}, 1); \tvar{onAll}(\tvar{master})$$
This model and proof can be updated with concrete experimentally determined values for each parameter by uniformly substituting the variables $B$ and $A$
with concrete values in both the model and the tactic.

The \textbf{Automatic Parameter Instantiation} update improves the basic parameter instantiation update by automatically detecting which variables are parameters and then constraining instantiation of parameters by identifying relevant initial conditions.

The \textbf{Replace Worst-Case Bounds with Approximations}
update improves models designed for the purpose of safety verification.
Often a variable occurring in the system is bounded above (or below) by its worst-case value.
Worst-case analyses are sufficient for establishing safety but are often overly conservative.
The approximation model update replaces worst-case bounds with approximate bounds
obtained via series expansions. The proof update then introduces a tactic on each branch of the proof that establishes our approximations are
upper/lower bounds by performing.

Models often assume perfect sensing and actuation. A common way of robustifying a model is to add a piecewise constant noise term
to the system's dynamics. Doing so while maintaining safety invaraints requires also updating the controller so that safety envelope 
computations incorporate this noise term.
The \textbf{Add Disturbance Term} update introduces noise terms to differential equations, 
systematically updates controller guards, and modifies the proof accordingly.

Uncertainty in object classification is easy to model in terms of sets of feasible models. In the simplest case, a robot might need to avoid an obstacle that is either static, moves in a straight line, or moves sinusoidally. Our generic model update library contains an update that changes the model by making a static point $(x,y)$ dynamic. For example, one such update introduces the equations $\{x'=-y, y'=-x\}$ to a system of differential equations in which the variables $x,y$ do not have differential equations. The controller is updated so that any statements about separation between $(a,b)$ and $(x,y)$ require global separation of $(a,b)$ from the circle on which $(x,y)$ moves. The proof is also updated by prepending to the first occurrence of a differential tactic on each branch with a sequence of differential cuts that characterize circular motion.

Model updates also provide a framework for characterizing algorithms that combine model identification and controller synthesis. One example is our synthesis algorithm for systems whose ODEs have solutions  in a decidable fragment of real arithmetic (a subset of linear ODEs). Unlike other model updates, we do not assume that any initial model is provided; instead, we learn a model (and associated control policy) entirely from data. The \textbf{Learn Linear Dynamics} update takes as input: (1) data from previous executions of the system, and (2) a desired safety constraint. From these two inputs, the update computes a set of differential equations $\texttt{odes}$ that comport with prior observations, a corresponding controller $\texttt{ctrl}$ that enforces the desired safety constraint with corresponding initial conditions $\texttt{init}$, and a Bellerophon tactic $\texttt{prf}$ which proves $\texttt{init} \limply \dibox{\{\texttt{ctrl}; \texttt{odes}\}^*}\texttt{safe}$. The full mechanism is beyond the scope of this paper and is explained in greater detail by Chapter 9 of \cite{FultonThesis}.

\paragraph{Significance of Selected Updates}
The updates described in this section demonstrate several possible modes of use for VPMUs and $\mu$learning. VPMUS can update existing models to account for systematic modeling errors (e.g., missing actuator noise or changes in the dynamical behavior of obstacles). VPMUs can automatically optimize control logic in a proof-preserving fashion. VPMUS can also be used to generate accurate models and corresponding controllers from experimental data made available at design time, without access to any prior model of the environment.

\section{Experimental Validation}
\label{sec:experimentalValidation}

The $\mu$learning algorithms introduced in this paper are designed to answer the following question: 
given a set of possible models that contains the one true model, how can we \emph{safely} perform a set of experiments 
that allow us to efficiently discover a minimal safety constraint?
In this section we present several experiments which demonstrate the use of $\mu$learning in safety-critical settings.

Overall, these experiments empirically validate our theorems by demonstrating that $\mu$learning processes with accurate models do not violate safety constraints.

Our simulations use a conservative discretization of the hybrid systems models, and we translated monitoring conditions by hand into Python from ModelPlex's C output. Although we evaluate our approach in a research prototype implemented in Python for the sake of convenience, there is a verified compilation pipeline for models implemented in \dL that eliminates uncertainty introduced by discretization and hand-translations \cite{DBLP:conf/pldi/BohrerTMMP18}.

\noindent\textbf{Adaptive Cruise Control}
Adaptive Cruise Control (ACC) is a common feature in new cars. 
ACC systems change the speed of the car in response to the changes in the speed of traffic in front of the car;
e.g., if the car in front of an ACC-enabled car begins slowing down, then the ACC system will decelerate to match
the velocity of the leading car. Our first set of experiments consider a simple linear model of ACC
in which the acceleration set-point is perturbed by an unknown parameter $p$; i.e., the relative position of the
two vehicles is determined by the equations
$\text{pos}_{\text{rel}}' = \text{vel}_{\text{rel}}, \text{vel}_{\text{rel}}' = \text{acc}_{\text{rel}} \label{eqn:relodes}$.

In \cite{aaai18}, the authors consider the collision avoidance problem when a noise term is added so that $\text{vel}_{\text{rel}}' = p\text{acc}_{\text{rel}}$.
We are able to outperform the approach in \cite{aaai18} by combining the \textbf{Add Noise Term} and \textbf{Parameter Instantiation} updates; we outperform
in terms of both avoiding unsafe states and in terms of cumulative reward. These two updates allow us to 
insert a multiplicative noise term $p$ into these equations, synthesize a provably correct controller, and then
choose the correct value for this noise term at runtime. Unlike \cite{aaai18}, $\mu$learning avoids all safety violations.
The graph in \rref{fig:experiments}
compares the Justified Speculative Control approach of \cite{aaai18} to our approach in terms of cumulative reward;
in addition to substantially outperforming the JSC algorithm of \cite{aaai18}, $\mu$learning also avoids 204 more
crashes throughout a 1,000 episode training process.

\vspace{-2em}
\begin{figure}
\begin{center}
\subfloat{%
\centering
\includegraphics[width=0.5\columnwidth]{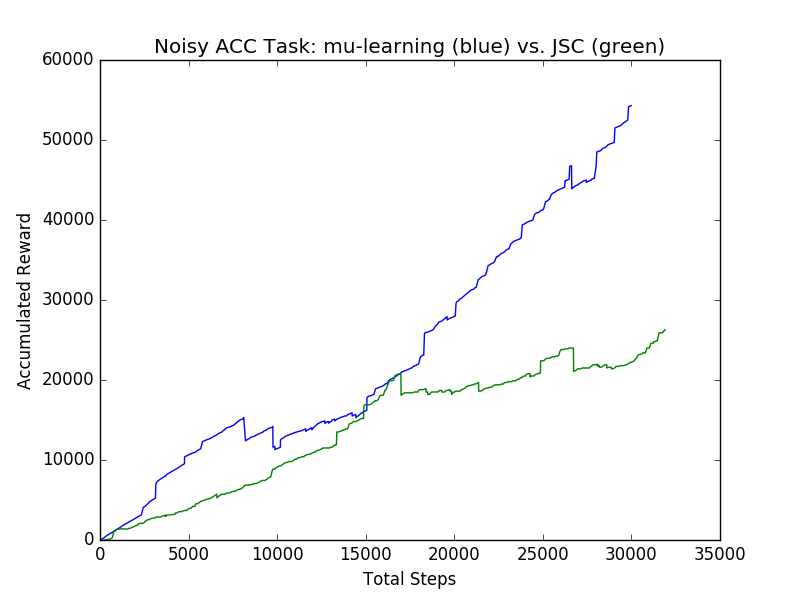}
}
\subfloat{%
\includegraphics[width=0.5\columnwidth]{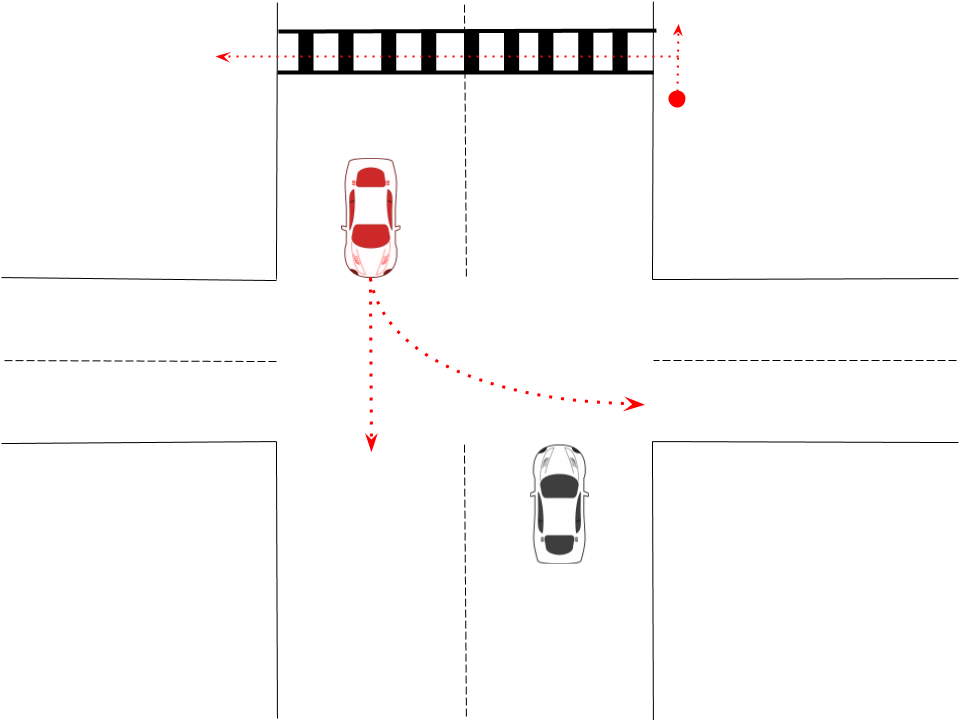}
}
\caption{
\small Left: The cumulative reward obtained by Justified Speculative Control \cite{aaai18} (green) and
$\mu$learning (blue) during training over 1,000 episodes with each episode truncated at 100 steps.
Each episode used a randomly selected error term that remains constant throughout each episode but may change between episodes.
Right: a visualization of the hierarchical safety environment.
}
\label{fig:experiments}
\end{center}
\end{figure}
\vspace{-2em}

\noindent\textbf{A Hierarchical Problem}
Model update learning can be extended
to provide formal guarantees for hierarchical reinforcement learning algorithms \cite{DBLP:journals/deds/BartoM03}.
If each feasible model $m$ corresponds to a subtask,
and if all states satisfying termination conditions for subtask $m_i$  are also safe initial states for any subtask $m_j$ reachable from $m_i$, 
then $\mu$learning directly 
supports safe hierarchical reinforcement learning by re-initializing $M$ to the initial (maximal) model set whenever reaching a termination condition for the current subtask.

We implemented a variant of $\mu$learning that performs this re-initialization and validated this algorithm in an environment where a car must first
navigate an intersection containing another car and then must avoid a pedestrian in a crosswalk (as illustrated in \rref{fig:experiments}).
In the crosswalk case, the pedestrian at $(ped_x, ped_y)$ may either continue to walk along a sidewalk indefinitely or
may enters the crosswalk at some point between $c_{min} \le ped_y \le c_{max}$ (the boundaries of the crosswalk).
This case study demonstrates that 
safe hierarchical reinforcement learning is simply safe $\mu$learning with safe model re-initialization.

\section{Related Work}

Related work falls into three broad categories: 
safe reinforcement learning, runtime falsification, and program synthesis.

Our approach toward safe reinforcement learning differs from existing approaches
that do not include a formal verification component
(e.g., as surveyed by Garc\'{i}a and Fern\'{a}ndez \cite{DBLP:journals/jmlr/GarciaF15}
and the SMT-based constrained learning approach of Junges et al. \cite{DBLP:conf/tacas/Junges0DTK16}) because
we focused on \emph{verifiably} safe learning; i.e., 
instead of relying on oracles or conjectures, constraints 
are derived in a provably correct way from 
formally verified safety proofs.
The difference between verifiably safe learning and safe learning is significant, 
and is equivalent to the difference between verified and unverified software.
Unlike most existing approaches our safety guarantees apply to both the learning process and
the final learned policy.

\rref{sec:background} discusses how our work relates to the few existing approaches toward \emph{verifiably} safe reinforcement learning.
Unlike those \cite{DBLP:conf/aaai/AlshiekhBEKNT18,HasanbeigKroening,DBLP:journals/corr/bettina20182,aaai18},
as well as work on model checking and verification for MDPs \cite{DBLP:conf/qest/HenriquesMZPC12}, we introduce an approach toward verifiably safe off-model learning.
Our approach is the first to combine model synthesis at design time with 
model falsification at runtime so that safety guarantees capture a wide range of possible futures instead of relying on a single accurate environmental model. 
Safe off-model learning is an important problem because autonomous systems must be able to cope with unanticipated scenarios. Ours is the first approach toward verifiably safe off-model learning.

Several recent papers focus on providing safety guarantees for model-free reinforcement learning.
Trust Region Policy Optimization \cite{DBLP:conf/icml/SchulmanLAJM15} defines safety as monotonic policy improvement,
a much weaker notion of safety than the constraints guaranteed by our approach. Constrained Policy Optimization
\cite{DBLP:conf/icml/AchiamHTA17} extends TRPO with guarantees that an agent nearly satisfies safety constraints during learning.  Br\'{a}zdil et al. \cite{DBLP:conf/atva/BrazdilCCFKKPU14} give probabilistic guarantees by performing a heuristic-driven exploration of the model.
Our approach is model-based instead of model-free, and instead of focusing on learning safely without a model we focus on 
identifying accurate models from data obtained both at design time and at runtime. Learning concise dynamical systems representations 
has one substantial advantage over model-free methods:
safety guarantees are stated with respect to an explainable model that captures the safety-critical assumptions about the system's dynamics.
Synthesizing explainable models is important
because safety guarantees are always stated with respect to a model; therefore, engineers must be able to understand inductively synthesized models 
in order to understand what safety properties their systems do (and do not) ensure.

Akazaki et al. propose an approach, based on deep reinforcement learning,
for efficiently discovering defects in models of cyber-physical systems with specifications stated in signal temporal logic \cite{AkazakiCPSfalsification}.
Model falsification is an important component of our approach; however, unlike Akazaki et al., 
we also propose an approach toward obtaining more robust models and explain how runtime falsification can be used to obtain safety guarantees for off-model learning.

Our approach includes a model synthesis phase that is closely related to program synthesis and program repair algorithms \cite{DBLP:conf/aaip/Kitzelmann09,DBLP:journals/tse/GouesNFW12,DBLP:conf/fm/RothenbergG16}.
Relative to work on program synthesis and repair, VPMUs are unique in several ways.
We are the first to explore \emph{hybrid} program repair.
Our approach combines program verification with mutation.
We treat programs as \emph{models} in which one part of the model is 
varied according to interactions with the environment and another part of the model is systematically derived (together with a correctness proof)
from these changes. This separation of the dynamics 
into inductively synthesized models and deductively synthesized controllers
enables our approach toward using programs
as representations of dynamic safety constraints during reinforcement learning.

Although we are the first to explore hybrid program repair, several researchers have explored the problem of
synthesizing hybrid systems from data \cite{AsarinSynthesis,DBLP:conf/hybrid/SadraddiniB18}. This work is closely related to our \textbf{Learn Linear Dynamics} update.
Sadraddini and Belta provide formal guarantees for data-driven model identification and controller synthesis \cite{DBLP:conf/hybrid/SadraddiniB18}.
Relative to this work, our \textbf{Learn Linear Dynamics} update is continuous-time, synthesizes a computer-checked correctness proof 
(as opposed to a least violation criterion implied by the synthesis algorithm, both not independently checked by a theorem prover),
and does not require dynamics to be compact, bounded, and locally connected.
Unlike Asarin et al. \cite{AsarinSynthesis}, our full set of model updates is sometimes capable of synthesizing nonlinear dynamical systems from data (e.g., the static $\rightarrow$ circular update)
and produces computer-checked correctness proofs for permissive controllers.

\section{Conclusions}
This paper introduces an approach toward verifiably safe off-model learning that uses a combination of design-time
verification-preserving model updates and runtime model update learning to provide safety guarantees even when there is no
single accurate model available at design time. We introduced a set of model updates that capture common ways in which models can 
deviate from reality, and introduced an update that is capable of synthesizing ODEs and provably correct controllers without access
to an initial model.
Finally, we proved safety and efficiency theorems for active $\mu$learning and evaluated our approach on  some representative examples of hybrid systems control tasks.
Together, these contributions constitute a first approach toward verifiably safe off-model learning.

\bibliographystyle{splncs04}
\bibliography{vpmu}
\clearpage

\end{document}